%% file: root.tex
\documentclass[letterpaper, 10 pt, conference]{ieeeconf} 
\usepackage[ruled,vlined,noend]{algorithm2e}
\usepackage{algcompatible}

\SetCommentSty{mycommentfont}
 \IEEEoverridecommandlockouts  

\usepackage{cite}
\usepackage{amsmath,amssymb,amsfonts}
\usepackage{graphicx}
\usepackage{textcomp}
\usepackage{times}
\usepackage{graphicx}        
\usepackage{multicol}        
\usepackage[bottom]{footmisc}

\usepackage{amsmath}
\usepackage{amsthm}
\usepackage{amssymb}
\usepackage{mathtools}
\usepackage{mathrsfs}
\usepackage[noend]{algpseudocode}
\let\labelindent\relax
\usepackage{enumitem}
\usepackage{multirow}
\usepackage{enumerate}
\usepackage{tabularx}
\usepackage[dvipsnames]{xcolor}
\usepackage{graphicx} 
\graphicspath{{./fig/}}
\usepackage{subcaption}
\usepackage{booktabs}
\usepackage{comment}
\usepackage{float}
\usepackage[breaklinks=true, colorlinks, bookmarks=true]{hyperref}

\usepackage{tikz}
\usepackage{pgfplots}
\pgfplotsset{compat=1.9}

\newtheorem{theorem}{Theorem}[section]

\newtheorem{proposition}[theorem]{Proposition}

\newcommand{\oprocendsymbol}{\hbox{$\bullet$}}
\newcommand{\oprocend}{\relax\ifmmode\else\unskip\hfill\fi\oprocendsymbol}

\newcommand{\norm}[1]{\left\lVert #1 \right\rVert}

\newcommand{\stkout}[1]{\ifmmode\text{\sout{\ensuremath{#1}}}\else\sout{#1}\fi}

\definecolor{paleGreen}{rgb}{.3, .7, .3}

\renewcommand{\theenumi}{(\roman{enumi})}

\newcommand{\x}[1]{$#1\times$}

\definecolor{rrtc}{RGB}{34,34,34}
\definecolor{cbfrrtc}{RGB}{0,114,178}
\definecolor{skipqp}{RGB}{213,94,0}
\definecolor{secondorder}{RGB}{204,121,167}
\definecolor{fixedstep}{RGB}{0,158,115}
\definecolor{firstorder}{RGB}{86,180,233}
\definecolor{ours}{RGB}{122,62,157}

\input{cls/sym}

\allowdisplaybreaks

\title{\LARGE \bf FLASH-CBF-RRT: \underline{F}ast \underline{L}ipschitz-\underline{A}daptive \underline{S}afe \underline{H}orizon CBF Steering for Certified Motion Planning}
\author{%
Mani Amani\textsuperscript{1,2,*},
Hesam Mojtahedi\textsuperscript{1,2,*},
Reza Akhavian\textsuperscript{1},
Nikolay Atanasov\textsuperscript{2},
and Ki Myung Brian Lee\textsuperscript{2}
\thanks{\textsuperscript{*} Mani Amani and Hesam Mojtahedi contributed equally.}
\thanks{\textsuperscript{1}Department of Civil, Construction, and Environmental Engineering,
San Diego State University, San Diego, CA, USA;
\textsuperscript{2}Department of Electrical and Computer Engineering,
University of California San Diego, La Jolla, CA, USA.}
\thanks{This work was supported by the National Science Foundation (NSF) CAREER Award (grant No. 2047138) and the Technology Innovation Program (20018112, Development of autonomous manipulation and gripping technology using imitation learning based on visual and tactile sensing) funded by the Ministry of Trade, Industry \& Energy (MOTIE), Korea.}
}
\begin{document}
\maketitle
\thispagestyle{empty}
\pagestyle{empty}

\begin{abstract} 
Fast motion planning requires both sample-efficient exploration and computationally efficient local steering. 
Conventional straight-line steering is inexpensive but wastes samples by rejecting colliding extensions. 
Quadratic program controllers with control barrier function (CBF) constraints can be used to achieve effective local steering but can increase the planning time due to the expensive sequential evaluations.
We develop a method to accelerate CBF steering by computing a time horizon within which CBF-filtered velocity may be applied constantly using closed-form kinematic bounds on collision-clearance. Our method, FLASH-CBF-RRT, significantly reduces the number of evaluations compared to prior CBF-RRT approaches because the planner can take adaptive steering step sizes that are shorter near obstacles and longer in open space.
We implement this steering procedure as an extension of RRT-Connect in OMPL. Across 689 MotionBenchMaker problems, FLASH-CBF-RRT matches RRT-Connect's 100\% success rate while reducing median planning time by 3.82 times, and is 3.88 times faster than fixed-step CBF-RRT. Hardware experiments on a 17-DoF mobile manipulator demonstrate 2.2 times faster planning and 4.2 times fewer evaluations. Videos and results are available at: \url{https://mamani828.github.io/FLASH-CBF-RRT/}.
\end{abstract}

\input{sections/introduction}
\input{sections/related_works}
\input{sections/problem_formulation}
\input{sections/methodology.tex}

\input{sections/experiments}
\input{sections/conclusion}
\bibliographystyle{IEEEtran}
\bibliography{references}
\end{document}

%% file: cls/sym.tex
\newcommand{\bfp}{\mathbf{p}}
\newcommand{\bfq}{\mathbf{q}}
\newcommand{\bfr}{\mathbf{r}}

\newcommand{\bfu}{\mathbf{u}}
\newcommand{\bfv}{\mathbf{v}}

\newcommand{\bfz}{\mathbf{z}}

\newcommand{\bfomega}{\boldsymbol{\omega}}



%% file: sections/introduction.tex
\section{Introduction}
\label{sec:intro}

Sampling-based motion planners such as rapidly exploring random trees (RRT)~\cite{lavalle1998rapidly}, RRT*~\cite{karaman2011sampling}, RRT-Connect (RRT-C)~\cite{kuffner2000rrt}, and probabilistic roadmap (PRM)~\cite{kavraki1996probabilistic} are widely used for high-dimensional robot motion planning. Their efficiency depends critically on local steering. Conventional steering is inexpensive but follows straight configuration-space paths, rejecting extensions that encounter obstacles. In cluttered environments, this propose-and-reject process wastes samples and limits expansion progress.

Control barrier functions (CBFs)~\cite{ames2017control}
can serve as an alternative steering mechanism because they provide local collision avoidance.
CBF controllers use geometric information such as a signed distance field (SDF)~\cite{kernelSDF} and its gradients to impose safety constraints on control inputs generated as solutions of quadratic programs (QPs). 
Rather than rejecting colliding extensions, CBF controllers can modify the steering command to avoid obstacles, as demonstrated in \cite{yang2019sampling,ahmad2022adaptive,yu2024eff}.
This can improve the sample efficiency of motion planning as samples near collision can be retained in the planning graph.

This advantage, however, comes with significant computational overhead \cite{yang2025lqrcbfrrt}. For robot manipulators, each CBF steering step requires additional computation of forward kinematics (FK) Jacobians and a QP solution.
Consequently, although CBF steering can produce better graph extensions, the computation can still be slower than conventional geometric planning in overall planning time.

\begin{figure}[t]
    \centering
    \includegraphics[width=\linewidth]{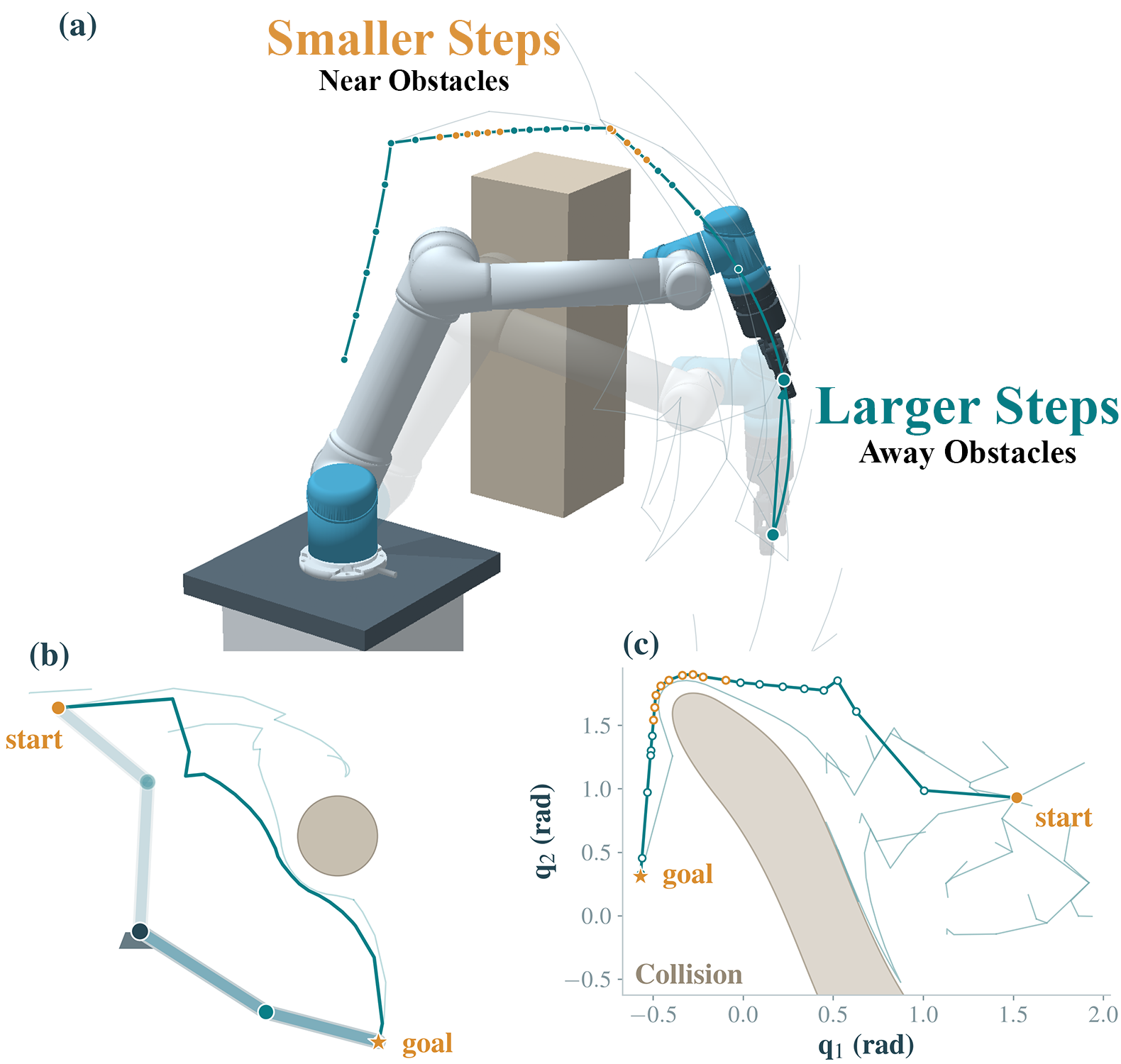}
    \caption{\small \textbf{FLASH-CBF-RRT.} (a) A CBF safety filter steers tree extensions around obstacles instead of rejecting them. Each filtered control is held for a \emph{certified} duration, computed in closed form as the time at which a kinematic bound on link displacement under constant joint velocity would exhaust the current collision clearance. (b)--(c) Workspace- and configuration-space views.}
    \label{fig:mainFig}
\end{figure}

We develop \textbf{F}ast \textbf{L}ipschitz-\textbf{A}daptive \textbf{S}afe \textbf{H}orizon (FLASH)-CBF-RRT, a sampling-based planner designed to provide two advantages: \emph{sample-efficient exploration} through CBF steering and \emph{low computational overhead} through certified adaptive propagation of the CBF controls, illustrated in Fig.~\ref{fig:mainFig}. This certified adaptive propagation step builds on a key observation that a safe control input need not be recomputed at every small integration step but remains valid for a certain safe horizon. We adaptively compute the safe horizon in closed form by bounding the variation in collision clearance under constant joint velocity through Lipschitz continuity. 
The clearance variation bound is obtained from a new closed-form bound on the robot link displacements throughout a constant joint-velocity motion.
The same bounds are used to reduce the individual QP sizes by removing provably inactive CBF constraints. Because the bounds are certified, FLASH-CBF-RRT inherits \emph{continuous motion safety} in addition to faster planning time.

We evaluate FLASH-CBF-RRT on UR5 in seven simulated MotionBenchMaker~\cite{Chamzas_2021} environments, and on a real Reachy2 robot where we plan and execute motions using an SDF representation~\cite{kernelSDF} estimated using an onboard RGB-D camera.
The simulated experiments show that FLASH-CBF-RRT plans $3.82\times$ faster than standard RRT-C \cite{kuffner2000rrt} and $20.83 \times$ faster than a prior CBF-RRT baseline~\cite{yang2025lqrcbfrrt},
which proposes to stop the steer without solving the QP when CBF constraints are active.
Ablation experiments show that our certified adaptive propagation method reduces median planning time by $3.75\times$ and CBF-QP solves
by $3.76\times$.

Our contributions are summarized as follows.
\begin{itemize}
    \item We develop FLASH-CBF-RRT, a sampling-based planner that uses CBF-based steering to improve the effectiveness of planning graph expansions.

    \item We derive a new certified adaptive propagation bound
    that determines how long a filtered control may be safely propagated, allowing FLASH-CBF-RRT to make long excursions with certified safety without intermediate geometric evaluations or CBF-QP solves. This new bound substantially reduces the computational overhead of CBF steering, as well as the number of constraints for the CBF-QP.

    \item We provide an efficient implementation of FLASH-CBF-RRT within OMPL \cite{sucan2012the-open-motion-planning-library} and evaluate it on UR5 simulation benchmarks and a 17-DoF Reachy2 mobile manipulator with real RGB-D data. The results demonstrate faster median planning time than both geometric RRT-C and CBF-RRT without certified adaptive propagation, at an equal success rate. 
\end{itemize}


%% file: sections/related_works.tex
\section{Related Work}
\label{sec:related}

The computational efficiency of sampling-based planners such as RRT~\cite{lavalle1998rapidly} and RRT-C~\cite{kuffner2000rrt} strongly depends on the time spent in collision-checking the candidate configurations. 
Thus, fast planning can be achieved by biasing the samples or speeding up each collision check. 
Approaches such as \cite{thomason2024vamp,curobo} speed up collision checking by exploiting specialized data structures such as \cite{ramsey2024capt} that enable parallel computation.
Here, we also consider speeding up individual geometric operations but do so through mathematical reformulations that are independent of compute hardware. Our approach reduces the overall \emph{number} of geometric operations by biasing the samples toward free space with adaptive steering step sizes.

Many prior approaches also consider biasing the samples toward free space. Classical approaches such as~\cite{maprm} and~\cite{amato98obprm} pre-process the obstacle geometry into discrete primitives that allow generating desirable samples. These approaches require an explicit representation of the obstacles, which is difficult to obtain from real sensor data. 
Instead, other approaches rely on continuous or sensor-driven obstacle representations \cite{Oleynikova_2017} and bias candidate edges via artificial potential fields \cite{qureshi2016potential}, local trajectory optimization \cite{choudhury2016rabit,kim2018dancingprm}, or reinforcement learning~\cite{chiang2019rlrrt}.
However, these methods either require costly trajectory optimization for each candidate motion~\cite{chiang2019rlrrt,kim2018dancingprm}, or do not certify safety of the local connections \cite{qureshi2016potential,chiang2019rlrrt}.
A promising recent direction is to combine sampling-based planners with CBF-based local controllers~\cite{ames2017control} that guarantee safety in the limit of small time-steps.  CBFs allow local collision avoidance by first representing the free space as the super-level set of a function, such as SDFs \cite{kernelSDF}, and imposing a constraint on the rate of change of the function over the controlled trajectory, which is linear in the control action. The combination of CBFs with sampling-based planning was first considered in~\cite{yang2019sampling}. 
Subsequent work improves the computation time, by biasing samples using the cross-entropy method \cite{ahmad2022adaptive} or stopping without solving the QP when the nominal control violates the CBF constraint \cite{yang2025lqrcbfrrt}.
The work in \cite{yu2024eff} extends CBF-RRT to manipulators by learning a neural CBF candidate that takes perception information as input. 
These methods achieve progress around obstacles but repeated barrier evaluations and QP solves introduce computational overhead. 
In comparison, our work focuses on speeding up CBF-RRT by deriving certified adaptive step sizes within which QP solves may be skipped, and reducing the number of CBF constraints at each configuration. 

Adaptive step sizes have been considered in the context of efficient collision checking.
Early work in~\cite{schwarzer2005adaptive} uses adaptive step sizes derived from clearance information to expedite collision checking of a straight line in configuration space.
Similarly, \cite{bialkowski2016efficient} uses clearance information to form reusable collision-free neighborhoods for collision checking. Similar benefits are observed in~\cite{ademovic2016expanded,lee2024safebubblecovermotion}.
Approaches such as~\cite{arslan2017sensory,huh2022probabilisticallysafecorridor,werner2025gpu} construct a locally safe polytopic corridor based on collision samples.  
Here, we use clearance information, and derive a new, tighter bound along a constant configuration velocity to reduce QP solves.
The adaptive step size can be viewed from the perspective of sampled-data and self-triggered CBFs~\cite{Breeden_2022,tan2024zeroorder,yang2019},
which bound the interval between control updates to ensure continuous safety throughout the motion. However, no prior method has computed an explicit safe hold-time by exploiting both the kinematic structure of an articulated manipulator and the structure of an SDF-based safety constraint. We derive such a hold-time for steering, during which the filtered control can be applied without reevaluating the barriers or resolving the CBF-QP.

\begin{figure*}[t]
    \centering
    \includegraphics[width=\textwidth]{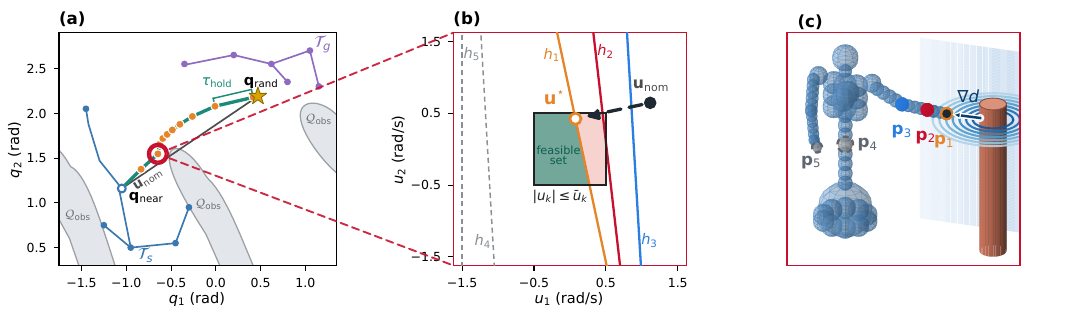}
  \caption{\small \textbf{Certified CBF steering.}
    (a)~One extension from $\bfq_{\mathrm{near}}$ toward $\bfq_{\mathrm{rand}}$. At each orange dot, the safety filter~\eqref{eq:cbf-steer-qp} computes a control held for the certified duration $\tau_{\mathrm{hold}}$, yielding longer segments in open space and shorter segments near obstacles.
    (b)~Control-space view at the circled configuration. Solid lines denote retained CBF constraints; the green polygon is the feasible set within the joint-velocity limits, onto which $\bfu_{\mathrm{nom}}$ is projected to obtain $\bfu^\star$. Dashed lines denote constraints ($h_4$, $h_5$) certified redundant by Theorem~\ref{thm:cbf-row-redundancy}.
    (c)~Workspace view with a sphere-approximated mobile manipulator and an SDF cross-section. SDF queries at sphere centers provide distances and gradients for the safety filter. Spheres 1--3 generate the retained constraints in (b); spheres 4 and 5 generate the screened constraints.}
    \label{fig:flash_method}
\end{figure*}

%% file: sections/problem_formulation.tex
\section{Problem Formulation}
\label{sec:problem}


We consider a manipulator robot with $n$ joints moving in a static workspace $\mathcal{W} \subset \mathbb{R}^3$ that contains obstacles $\mathcal{O} \subset \mathcal{W}$. 
The robot has configuration $\bfq\in\mathcal C$ and velocity input
$\bfu\in\mathbb R^n$, so that $\dot{\bfq} = \bfu$.
The robot's geometry is approximated by spheres indexed by $\mathcal S=\{1,\ldots,N\}$, which can be obtained through tools such as FOAM \cite{foam}.
Each sphere $s \in \mathcal S$ has radius $r_s>0$, and its world-frame center at configuration $\bfq$ is given by the FK map $\bfp_s:\mathcal C\rightarrow\mathbb R^3$. 

We assume that we are given an SDF representation $d:\mathbb{R}^3\rightarrow\mathbb{R}$ of the obstacles $\mathcal{O}$, defined as the signed distance to the closest obstacle:
\begin{equation}\label{eq:sdf_def}
    d(\bfp) = \begin{cases}
        \phantom{-}\min_{\bfp' \in \partial \mathcal{O}} \|\bfp - \bfp'\|, & \bfp \notin \mathcal{O}, \\
        -\min_{\bfp' \in \partial \mathcal{O}} \|\bfp - \bfp'\|, & \bfp \in \mathcal{O}.
    \end{cases}
\end{equation}

An SDF is 1-Lipschitz, i.e., $|d(\bfp') - d(\bfp)| \leq \| \bfp' - \bfp \|$. Whenever differentiable, its gradient satisfies $\| \nabla d(\bfp) \| = 1$. We assume that we have a differentiable SDF map from sensor data that satisfies the 1-Lipschitz property. This can be achieved using tools such as \cite{1-lipschitzneural,kernelSDF}.

Let $\mathcal P\subseteq\{(a,b)\in\mathcal S\times\mathcal S:a<b\}$ denote sphere pairs on non-adjacent robot links that may self-collide. A configuration $\bfq$ is collision-free if
\begin{equation}\label{eq:coll_free}
\begin{aligned}
d(\bfp_s(\bfq)) &\geq r_s, && \forall s\in\mathcal S,\\
\norm{\bfp_a(\bfq)-\bfp_b(\bfq)} & \geq r_a+r_b,
&& \forall (a,b)\in\mathcal P.
\end{aligned}
\end{equation}
Given two configurations $\bfq_{\mathrm{start}},\bfq_{\mathrm{goal}}$, we aim to find a continuous path $\pi:[0,1]\rightarrow\mathcal{C}$ that satisfies the collision constraints~\eqref{eq:coll_free} and the boundary conditions $\pi(0)=\bfq_{\mathrm{start}}$ and $\pi(1)=\bfq_{\mathrm{goal}}$.
As shown in Fig.~\ref{fig:flash_method}, our approach utilizes the SDF $d(\bfp)$ and its gradients $\nabla d(\bfp)$ to: a) find collision-avoiding velocities to efficiently explore the free space; and b) compute certified adaptive horizons within which the velocities from a) can be used without recomputing.

%% file: sections/methodology.tex
\section{FLASH-CBF-RRT}
\label{sec:overview}
FLASH-CBF-RRT grows two trees, $\mathcal{T}_s$ from the start and $\mathcal{T}_g$ from the goal. Each iteration samples $\bfq_{\mathrm{rand}}$, finds its nearest node $\bfq_{\mathrm{near}}$ in the active tree, and extends from $\bfq_{\mathrm{near}}$ toward  $\bfq_{\mathrm{rand}}$ with the certified CBF steering of Algorithm~\ref{alg:certified-cbf-steer}, yielding $\bfq_{\mathrm{new}}$. The other tree then steers toward $\bfq_{\mathrm{new}}$ until it is reached or progress stalls, after which the trees swap roles; planning ends once they connect. Unlike the straight-line steering and collision checking of RRT-C, each steering update bends around obstacles using a CBF-QP controller. 
The CBF steering is accelerated by certifying how long a control can be held without re-evaluation (\textsc{FirstOrderBound} and \textsc{SecondOrderBound}, Sec.~\ref{sec:certified-steering-horizon}) and removing redundant constraints (\textsc{Screen}, Sec.~\ref{sec:lipschitz-row-reduction}). 

\begin{algorithm}
\DontPrintSemicolon
\caption{Certified CBF Steering}
\label{alg:certified-cbf-steer}
\KwIn{$\bfq$ and $\bfq_{\mathrm{target}}$}
\KwOut{Steering status and $\bfq_{\mathrm{new}}$}
\While{$\bfq$ has not reached $\bfq_{\mathrm{target}}$}{
    $\bfu_{\mathrm{nom}}\gets K(\bfq_{\mathrm{target}}-\bfq)$\; 
    $\tau_{\mathrm{target}}
\gets
\frac{\lVert \mathbf{q}_{\mathrm{target}}-\mathbf{q}\rVert}
     {\lVert \mathbf{u}_{\mathrm{nom}}\rVert}.$\;
    $\mathcal I_{\mathrm{QP}}\gets\textsc{Screen}(\bfq)$ \tcp*[r]{Thm.~\ref{thm:cbf-row-redundancy}}
    $\bfu^\star\gets\textsc{CBF-QP}(\bfq,\bfu_{\mathrm{nom}},\mathcal I_{\mathrm{QP}})$ \tcp*[r]{Eq.~\eqref{eq:cbf-steer-qp}} \If{CBF-QP infeasible $\lor$ $\max_j |u^\star_j| \le \epsilon_u$} {\Return $(\textsc{Trapped},\bfq)$\;}
    \tcp{Props.~\ref{prop:certified-steering-horizon} and ~\ref{prop:kinematic}}
    \ForEach{$i\in\mathcal I$}{
    $\tau_{i,\mathrm{Lip}}\gets\textsc{FirstOrderBound}(h_i,\bfq,\bfu^\star)$\;
        $\tau_{i,\mathrm{Kin}}\gets\textsc{SecondOrderBound}(h_i,\bfq,\bfu^\star)$\; 
        $\tau_i\gets\max\{\tau_{i,\mathrm{Lip}},\tau_{i,\mathrm{Kin}}\}$\;
    }
     $\tau_\mathrm{hold}\gets\min\{\tau_{\mathrm{target}},\min_{i\in\mathcal I}\tau_i\}$ \tcp*[r]{Eq.~\eqref{eq:tau-hold}}
    \If{$\tau_\mathrm{hold}$ below threshold}{\Return $(\textsc{Advanced},\bfq)$\;}
    $\bfq\gets\bfq+ \tau_\mathrm{hold}\,\bfu^\star$\;
}
    $\bfq_{\mathrm{new}}\gets\bfq$\;
\Return $(\textsc{Reached},\bfq_{\mathrm{new}})$\;
\end{algorithm}



\subsection{CBF-Based Steering}
\label{sec:cbf-steer}
Standard sampling-based planners steer from $\bfq_{\mathrm{near}}$ toward a target $\bfq_{\mathrm{target}}$ along a straight line in configuration space and reject the segment if any intermediate state is in collision. 
We instead \emph{filter} the steering command online. Let $\bfq\in\mathcal C$ be the current configuration during an extension and define a nominal control $\bfu_{\mathrm{nom}}=K(\bfq_{\mathrm{target}}, \bfq)$ given by a controller.

Using the collision model from Sec.~\ref{sec:problem}, define the barriers
\begin{equation}
h_i(\bfq)
:=
\begin{cases}
d(\bfp_i(\bfq))-r_i,
& i\in\mathcal S,\\
\norm{\bfp_a(\bfq)-\bfp_b(\bfq)}\!-\!r_a\!-\!r_b,
& i\!=\!(a,b)\!\in\!\mathcal P\!.
\end{cases}
\label{eq:barriers}
\end{equation}
where $i\in \mathcal I:=\mathcal S\cup\mathcal P$ indexes the sphere--obstacle and self-collision barriers. Rather than
executing $\bfu_{\mathrm{nom}}$ directly, we \emph{filter} it to satisfy the CBF constraint as:
\begin{equation}
\begin{aligned}
    \bfu^\star=\arg\min_{\bfu}\quad&\tfrac{1}{2}\|\bfu-\bfu_{\mathrm{nom}}\|^2\\
    \text{s.t.}\quad&\nabla h_i(\bfq)^\top\bfu\geq-\kappa h_i(\bfq),\quad \forall i\in\mathcal I,\\
    &\bfu_{\min}\leq\bfu\leq\bfu_{\max},
\end{aligned}
\label{eq:cbf-steer-qp}
\end{equation}
where $\kappa>0$ is the CBF gain bounding the admissible rate of barrier decrease. The problem~\eqref{eq:cbf-steer-qp} is a QP, and can be solved using methods such as~\cite{qpmad}.
We integrate the result through the first-order model $\dot\bfq=\bfu^\star$. Collision avoidance is thus enforced during the extension itself rather than checked afterward. Because the CBF-QP acts as a local safety filter, there may exist configurations for which the feasible control is substantially attenuated by the active barrier constraints, resulting in little or no progress toward $\bfq_{\mathrm{target}}$. We detect such cases using a minimum-control threshold; if $\max_j |u^\star_j| \le \epsilon_u$, where $\epsilon_u$ is the termination threshold, the current extension is terminated early. 


\subsection{Certified Adaptive Propagation}
\label{sec:certified-steering-horizon}
While CBF steering can provide sample efficiency, it has to solve a QP at every timestep. Meanwhile, the filtered control $\bfu^\star$ can stay collision free for a longer period while making progress towards the node. We derive a certified adaptive horizon that the filtered control remains collision free, thus skipping QP solves and proximity queries.

Let $\bfq_0$ be the
initial configuration at which the QP~\eqref{eq:cbf-steer-qp} was solved, and let $\tau_{\mathrm{target}}>0$ denote the
maximum propagation duration considered by the current steering update.
Holding $\bfu^\star$ under the first-order model introduced in
Sec.~\ref{sec:cbf-steer}:
\begin{equation}
    \bfq(\tau)=\bfq_0+\tau \bfu^\star,
    \qquad
    0\leq \tau \leq\tau_{\mathrm{target}}.
    \label{eq:held-control-trajectory}
\end{equation}

The exact safe horizon of the held control is
\begin{equation}
    \tau_{\mathrm{safe}}^\star
    :=
    \inf\big\{
        \tau\in[0,\tau_{\mathrm{target}}] \mid
        \ h_i(\bfq(\tau)) < 0,\; \exists i\in\mathcal I
    \big\}.
\label{eq:true-safe-hold-time}
\end{equation}
Here, \(\tau_{\mathrm{target}}\) is the nominal propagation time to the target and caps propagation under $\bfu^\star$ to preserve RRT's local exploration behavior. To bound  $\tau^\star_{\mathrm{safe}}$, we use the fact that all barriers $h_i$ are a form of geometric distance, and are hence 1-Lipschitz in the corresponding workspace sphere center $\bfp_i(\bfq)$. Therefore, the change in any $h_i$ is bounded by the displacement in the sphere center. With $\bfp_{i}(\tau) = \bfp_i(\bfq(\tau))$ as a shorthand:
\begin{equation}\label{eq:displacement}
    d_i(\tau) = \|\bfp_i(\tau)- \bfp_i(0) \|,
\end{equation}
and $h_i(\bfq(\tau)) \geq h_i(\bfq(0)) - d_i(\tau)$ by Lipschitz continuity.
We derive two closed-form upper bounds on the displacement $d_i(\tau)$ to lower bound the barrier value at any $\tau$, and identify safe horizons within which the barrier values remain positive. 

\subsubsection{First-Order Bound}
We first derive a linear upper bound on $d_i(\tau)$ using the Lipschitz continuity of the FK map $\bfp_s(\cdot)$. We use the following upper bound on the Lipschitz constant from~\cite{ademovic2016expanded}. For each joint $j$, let $\ell_j$ denote the length of joint $j$, and for each sphere $s \in \mathcal{S}$, let $m_s$ denote the joint the sphere $s$ belongs to. The Lipschitz constant for each sphere's position $\bfp_s$ with respect to joint $j$ is bounded by:
\begin{equation}
    \Lambda_{sj} = \sum_{k=j}^{m_s} \ell_k,
\end{equation}
so that $d_i(\tau) \leq \sum_{k} \Lambda_{ik} |q_k(\tau) - q_k(0)|$ for $i \in \mathcal{S}$. Intuitively, the constants $\Lambda_{sj}$ corresponds to the largest possible radius of rotation of sphere $s$ around joint $j$'s axis, assuming a fully extended manipulator~\cite{ademovic2016expanded}.
For the barriers $i \in \mathcal{I}$, the Lipschitz constants are:
\begin{equation}
A_{ik}
:=
\begin{cases}
\Lambda_{ik},
& i\in\mathcal S,\\
\Lambda_{ak}+\Lambda_{bk},
& i=(a,b)\in\mathcal P.
\end{cases}
\label{eq:barrier-lipschitz-coefficients}
\end{equation}
which satisfy $h_i(\bfq(\tau)) - h_i(\bfq(0)) \geq -\sum_{k} A_{ik} |q_k(\tau) - q_k(0)|$ for all barriers. This follows from the 1-Lipschitz property of the environment distance field for the environment barriers $i \in \mathcal{S}$, and from the triangle inequality for the self-collision barriers $i \in \mathcal{P}$. 
Using the Lipschitz constants $A_{ik}$, the first-order bound is given as follows:

\begin{proposition}[First-order horizon]
\label{prop:certified-steering-horizon}
For a given $\bfu^\star$, consider the first-order bound: 
\begin{equation}
    \gamma_i^\star
    :=
    \sum_{k=1}^{n}
    A_{i k}|u_k^\star|.
    \label{eq:held-control-decrease-rate}
\end{equation}
Suppose $h_i(\bfq_0)\geq0$ for every $i\in\mathcal I$, and define
\begin{equation}
    \tau_{i,\mathrm{Lip}}
    :=         \frac{h_i(\bfq_0)}{\gamma_i^\star}
    \label{eq:per-barrier-lipschitz-horizon}
\end{equation}
with $+\infty$ for $\gamma_i^\star=0$. Then, 
\begin{equation}
    \tau_{\mathrm{Lip}}
    :=
    \min
    \left\{
        \tau_{\mathrm{target}},
        \min_{i\in\mathcal I}
        \tau_{i,\mathrm{Lip}}
    \right\} \leq \tau_{\mathrm{safe}}^\star
    \label{eq:global-lipschitz-horizon}
\end{equation}
is a certified hold time, i.e., $h_i(\bfq(\tau)) \geq 0$, $\forall \tau \in [0, \tau_{\mathrm{Lip}}]$.
\end{proposition}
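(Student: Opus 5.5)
The plan is to chain the two Lipschitz facts already recorded before the statement: the barrier bound $h_i(\bfq(\tau)) - h_i(\bfq_0) \geq -\sum_k A_{ik}|q_k(\tau)-q_k(0)|$, and the held-control trajectory~\eqref{eq:held-control-trajectory}. Along $\bfq(\tau)=\bfq_0+\tau\bfu^\star$ we have $|q_k(\tau)-q_k(0)| = \tau|u_k^\star|$ for every joint $k$. Substituting gives, for each $i\in\mathcal I$ and each $\tau\in[0,\tau_{\mathrm{target}}]$,
\begin{equation*}
h_i(\bfq(\tau)) \;\geq\; h_i(\bfq_0) - \tau\sum_{k=1}^n A_{ik}|u_k^\star| \;=\; h_i(\bfq_0) - \tau\gamma_i^\star .
\end{equation*}
This is a linear lower bound in $\tau$. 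Its right-hand side is nonnegative exactly when $\tau\le h_i(\bfq_0)/\gamma_i^\star=\tau_{i,\mathrm{Lip}}$. When $\gamma_i^\star=0$ it is just $h_i(\bfq_0)\ge 0$ for all $\tau$, which matches the convention $\tau_{i,\mathrm{Lip}}=+\infty$.

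Next I take $\tau\in[0,\tau_{\mathrm{Lip}}]$. Then $\tau\le\tau_{i,\mathrm{Lip}}$ for every $i$, so $h_i(\bfq(\tau))\ge 0$ for all $i$ simultaneously, and $\tau\le\tau_{\mathrm{target}}$ keeps us inside the domain where~\eqref{eq:held-control-trajectory} is defined. This is the certified hold-time claim.

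For the inequality $\tau_{\mathrm{Lip}}\le\tau^\star_{\mathrm{safe}}$, I argue from the definition~\eqref{eq:true-safe-hold-time}. Every $\tau$ in the set whose infimum defines $\tau^\star_{\mathrm{safe}}$ has some $h_i(\bfq(\tau))<0$. By the previous step, such a $\tau$ cannot lie in $[0,\tau_{\mathrm{Lip}}]$, so it satisfies $\tau>\tau_{\mathrm{Lip}}$. Taking the infimum gives $\tau^\star_{\mathrm{safe}}\ge\tau_{\mathrm{Lip}}$. If the set is empty, I read the infimum as $\tau_{\mathrm{target}}$ (or $+\infty$), and the inequality holds trivially since $\tau_{\mathrm{Lip}}\le\tau_{\mathrm{target}}$. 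I will state this convention explicitly.

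The main obstacle is justifying the barrier Lipschitz bound itself, specifically the FK displacement bound $\|\bfp_s(\bfq)-\bfp_s(\bfq')\|\le\sum_k\Lambda_{sk}|q_k-q'_k|$. I intend to invoke it from~\cite{ademovic2016expanded} as the paper does. If more care is needed, I would move one joint at a time along a staircase path from $\bfq_0$ to $\bfq(\tau)$. Rotating joint $k$ alone moves the sphere center on a circle of radius at most $\Lambda_{sk}$, so the arc length, and hence the chord, is at most $\Lambda_{sk}|\Delta q_k|$; the triangle inequality then sums the per-joint contributions. With this in hand, the environment barriers follow from the 1-Lipschitz SDF. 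The self-collision barriers follow from the reverse triangle inequality: $\big|\|\bfp_a-\bfp_b\|-\|\bfp_a'-\bfp_b'\|\big|\le\|\bfp_a-\bfp_a'\|+\|\bfp_b-\bfp_b'\|$, which yields the coefficients $\Lambda_{ak}+\Lambda_{bk}$ in~\eqref{eq:barrier-lipschitz-coefficients}.
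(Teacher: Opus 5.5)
Your proposal is correct and follows the paper's proof: substitute $\bfq(\tau)=\bfq_0+\tau\bfu^\star$ into the barrier Lipschitz bound to get $h_i(\bfq(\tau))\ge h_i(\bfq_0)-\tau\gamma_i^\star$, which is nonnegative for every $\tau\le\tau_{i,\mathrm{Lip}}$. You also spell out two things the paper leaves implicit: the infimum argument for $\tau_{\mathrm{Lip}}\le\tau^\star_{\mathrm{safe}}$ (including the empty-set case), and a one-joint-at-a-time justification of the FK Lipschitz constants that the paper simply cites.
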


\begin{proof}
Substituting $\bfu^\star$, for every $\tau \leq\tau_{\mathrm{Lip}}$ and
$i\in\mathcal I$, $h_{i}(\bfq(\tau)) - h_i(\bfq_0) \geq -\tau \gamma_i^\star$ by Lipschitz continuity. Since $\tau \leq \tau_{\mathrm{Lip}}$, substituting~\eqref{eq:per-barrier-lipschitz-horizon} yields the claim. 
\end{proof}

This bound is conservative for two reasons. First, the
coefficients $\Lambda_{ik}$ are worst-case kinematic bounds over
$\mathcal C$. Second, \eqref{eq:held-control-decrease-rate} assumes
that all joint contributions simultaneously reduce the same
clearance. Consequently, the resulting horizon $\tau_{\mathrm{Lip}}$ may be substantially smaller than $\tau_{\mathrm{safe}}^\star$.

\subsubsection{Second-Order Bound}
We now derive a second-order bound on the displacement. To do so, we invoke 
Taylor's theorem with integral remainder as:
\begin{equation}
    d_i(\tau)
    \le
    \tau\|\dot{\bfp}_i(0)\|_2
    +\int_0^\tau(\tau-s)\|\ddot{\bfp}_i(s)\|\,ds.
    \label{eq:quadratic-displacement}
\end{equation}
The linear term can be computed exactly as $\|\dot{\bfp}_i(0)\| = \| J_s(\bfq_0)\bfu^\star \|$, where $J_s(\bfq)$ is the position Jacobian of sphere $s$ at $\bfq$. We derive a uniform bound on the second order term $\|\ddot{\bfp}_i(s)\|$ over $[0, \tau_{\mathrm{target}}]$ so that~\eqref{eq:quadratic-displacement} can be bounded uniformly. 
The acceleration of the sphere center evolves as:
\begin{equation}\label{eq:acceleration}
    \ddot{\bfp}_s(\tau) \!=\! \sum_{j=1}^{m_s} \dot {\boldsymbol{\omega}}_j(\tau) \times \bfr_{sj}(\tau) + {\boldsymbol{\omega}}_j(\tau) \times \left({\boldsymbol{\omega}}_j(\tau) \times \bfr_{sj}(\tau) \right),
\end{equation}
where $\times$ is the cross product, $\boldsymbol{\omega}_j(\tau)$ is the angular velocity of link $j$ at $\tau$, and $\bfr_{sj}(\tau)$ is the vector between: 1) for $j < m_s$, the origins of joint $j$ and $j+1$ in the world frame, and 2) for $j = m_s$, the origin of joint $j$ and the center $\bfp_{s}$. 

To bound linear acceleration~\eqref{eq:acceleration}, we will bound the angular velocity $\|\boldsymbol \omega_j (\tau) \|$ and angular acceleration $\|\dot{\boldsymbol \omega}_j (\tau) \|$ uniformly over $\tau \in [0, \tau_{\mathrm{target}}]$. 
First, $\bfomega_j = \sum_k u^\star_k \bfz_k(\tau) $, where $\bfz_k(\tau)$ are unit vectors along each joint axis in the world frame. 
Suppose that $\underline c_{km}\le\bfz_k(\tau)^{\top}\bfz_m(\tau)\le\overline c_{km}$ for all $k<m$ on the path and $t\in[0,\tau_{\mathrm{target}}]$, which we will derive soon.
Expanding $\|\bfomega_j\|^2$ gives
\begin{equation}\label{eq:angular-velocity-bound}
\begin{aligned}
    \|\bfomega_j\|^2 &\!\leq\!
    \sum_{k=1}^{j}(u_k^\star)^2
    \!+\!\!\!\!\!\!\sum_{1\le k<m\le j}\!\!
    2\max\{u_k^\star u_m^\star\underline c_{km},
          u_k^\star u_m^\star\overline c_{km}\}. \\
    &:= \Omega_j^2
\end{aligned}
\end{equation}
For angular acceleration, we have $\dot{\bfomega}_j=\sum_{k=1}^{j}u_k^\star(\bfomega_{k-1}\times\bfz_k)$. The cross product term is bounded as $\|\bfomega_{k-1}\times\bfz_k\|^2=\|\bfomega_{k-1}\|^2-(\bfomega_{k-1}^{\mathsf T}\bfz_k)^2$.
Using the bounds $\underline{c}_{km}$ and $\overline{c}_{km}$ on the dot-product terms from before, it can be shown that:
\begin{equation}
\begin{aligned}
    \| \dot{\bfomega}_j \| &\leq \sum_{k=1}^{j}|u_k^\star| \sqrt{\Omega_{k-1}^2- \left( \max\{0,\underline\eta_k,-\overline\eta_k\} \right) ^2}, \\
    &:= B_j,
\end{aligned}
\label{eq:angular-acceleration-bound}
\end{equation}
where  $\underline{\eta}_k :=\sum_{r=1}^{k-1}
    \min\{u_r^\star\underline c_{rk},u_r^\star\overline c_{rk}\}$, and $\overline\eta_k$ sums over the maximum. The angular velocity bound~\eqref{eq:angular-velocity-bound} is understood with $\Omega_0:=0$ and empty sums for $\eta$ are equal to zero.
The radicand is nonnegative because it derives from bounding the dot product $\bfomega_{k-1}^\top \bfz_k$.  
Because the segment lengths $\ell_j = \| \bfr_{sj}(\tau) \|$ are fixed, the acceleration~\eqref{eq:acceleration} can be bounded as: 
\begin{equation}\label{eq:acceleration-bound}
\begin{aligned}
\|\ddot{\bfp}_s(\tau) \| &\leq \sum_{j=1}^{m_s} \| \dot {\boldsymbol{\omega}}_j(\tau) \| \ell_j + \| {\boldsymbol{\omega}}_j(\tau) \|^2 \ell_j \\
&\leq \sum_{j=1}^{m_s} \ell_j (B_j + \Omega_j^2) := C_s.
\end{aligned}
\end{equation}
This gives our second-order bound:
\begin{proposition}[Second-order safe horizon]
\label{prop:kinematic}
Suppose $h_i(\bfq_0)\ge0$ for every $i\in\mathcal I$. Let the per-barrier second-order bound be:
\begin{equation}
    \tau_{i,\mathrm{Kin}}:=
    \frac{2h_i(\bfq_0)}{\|J_i(\bfq_0)\bfu^\star\|+\sqrt{\|J_i(\bfq_0)\bfu^\star\|^2+2C_ih_i(\bfq_0)}},
    \label{eq:quadratic-horizon}
\end{equation}
Then, the second-order horizon defined as:
\begin{equation}
    \tau_{\mathrm{Kin}}
    :=
    \min\left\{
        \tau_{\mathrm{target}},\;
        \min_{i\in\mathcal I}
        \tau_{i,\mathrm{Kin}}
    \right\},
    \label{eq:taukin}
\end{equation}
is a certified horizon. I.e., $h_i(\bfq(\tau)) \geq 0$, $\forall i$, $\forall \tau \in [0, \tau_{\mathrm{Kin}}]$.
\end{proposition}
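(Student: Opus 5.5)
The plan is to reduce the statement to a scalar quadratic inequality for each barrier, and then identify $\tau_{i,\mathrm{Kin}}$ as the positive root of that quadratic. I would not re-derive the kinematic bound on $C_i$: I take as given that $\|\ddot{\bfp}_s(\tau)\|\le C_s$ holds uniformly on $[0,\tau_{\mathrm{target}}]$, as established in \eqref{eq:acceleration-bound}.

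\textbf{Step 1 (displacement bound).} For a sphere barrier $i\in\mathcal S$, write $\bfp_i(\tau)-\bfp_i(0)=\tau\dot{\bfp}_i(0)+\int_0^\tau(\tau-s)\ddot{\bfp}_i(s)\,ds$. Taking norms and using the uniform acceleration bound gives
\[
d_i(\tau)\le \tau\|J_i(\bfq_0)\bfu^\star\|+\tfrac12 C_i\tau^2
\quad\text{for } \tau\in[0,\tau_{\mathrm{target}}],
\]
which is \eqref{eq:quadratic-displacement} with $\int_0^\tau(\tau-s)\,ds=\tau^2/2$.

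For a self-collision pair $i=(a,b)$, I would interpret $J_i\bfu^\star$ and $C_i$ as the corresponding quantities for the relative vector $\bfp_a-\bfp_b$. That is, $J_i=J_a-J_b$ and $C_i=C_a+C_b$ by the triangle inequality on the accelerations. The barrier $\|\bfp_a-\bfp_b\|-r_a-r_b$ is 1-Lipschitz in this relative vector, so the same quadratic bound applies to its displacement. If the paper instead intends $\|J_a\bfu^\star\|+\|J_b\bfu^\star\|$, the same argument goes through.

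\textbf{Step 2 (barrier lower bound).} Combining the Lipschitz estimate $h_i(\bfq(\tau))\ge h_i(\bfq_0)-d_i(\tau)$ with Step 1 gives
\[
h_i(\bfq(\tau))\ge g_i(\tau):=h_i(\bfq_0)-v_i\tau-\tfrac12C_i\tau^2,
\qquad v_i:=\|J_i(\bfq_0)\bfu^\star\|.
\]

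\textbf{Step 3 (root analysis).} Since $C_i\ge0$ and $v_i\ge0$, $g_i$ is concave and nonincreasing on $\tau\ge0$, with $g_i(0)=h_i(\bfq_0)\ge0$. Hence $g_i\ge0$ on $[0,\tau^+]$, where $\tau^+$ is its nonnegative root:
\[
\tau^+=\frac{-v_i+\sqrt{v_i^2+2C_ih_i(\bfq_0)}}{C_i}.
\]
Rationalizing the numerator gives exactly \eqref{eq:quadratic-horizon}. This form also covers the degenerate cases:
\begin{itemize}
\item when $C_i=0$, it gives $h_i/v_i$;
\item when $h_i(\bfq_0)=0$ and $v_i>0$, it gives $0$;
\item when the denominator vanishes ($v_i=0$, and $C_i=0$ or $h_i=0$), I would adopt the convention $+\infty$ (or $0$ if $h_i=0$), mirroring Proposition~\ref{prop:certified-steering-horizon}.
\end{itemize}

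\textbf{Step 4 (combine).} For $\tau\le\tau_{\mathrm{Kin}}\le\min\{\tau_{\mathrm{target}},\tau_{i,\mathrm{Kin}}\}$, we are in the range where the uniform bound $C_i$ is valid and where $g_i(\tau)\ge0$. So $h_i(\bfq(\tau))\ge0$ for all $i$, which is the claim.

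The main obstacle is Step 1, specifically its reliance on the uniformity of $C_i$ over the whole interval $[0,\tau_{\mathrm{target}}]$. That uniformity rests on the cosine bounds $\underline c_{km},\overline c_{km}$ holding along the entire held trajectory, which the paper promises to derive later. I would state this explicitly as a hypothesis of the argument. I would also note that the bound on $\tau_{i,\mathrm{Kin}}$ is only used for $\tau\le\tau_{\mathrm{target}}$, so no acceleration control beyond that interval is needed.
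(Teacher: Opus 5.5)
Your proposal is correct and follows essentially the same route as the paper. Both arguments apply Taylor's theorem with integral remainder and the uniform acceleration bound $C_i$ on $[0,\tau_{\mathrm{target}}]$, use the 1-Lipschitz property of $h_i$ to obtain $h_i(\bfq(\tau))\ge h_i(\bfq_0)-\|J_i(\bfq_0)\bfu^\star\|\tau-\tfrac12 C_i\tau^2$, and take the nonnegative root in rationalized form. Your treatment of the degenerate cases and of self-collision pairs (world-frame relative vector with $C_a+C_b$, a valid though typically looser alternative to the paper's LCA-frame construction) is more careful than the paper's short proof but changes nothing essential.
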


\begin{proof}
Since $h_i$ is $1$-Lipschitz in $\bfp_i$ for $i\in\mathcal S$, $h_i(\bfq(\tau))\ge h_i(\bfq_0)-\|J_i(\bfq_0) \bfu^\star\|\tau-\frac12C_i\tau^2$ on $[0,\tau_{\mathrm{target}}]$ using the Taylor expansion~\eqref{eq:quadratic-displacement} and the acceleration bound~\eqref{eq:acceleration-bound}. Solving the quadratic yields the result.
\end{proof}



\noindent\textbf{Speed-bound correction:} The second-order horizon~\eqref{eq:taukin} yields generally longer hold-times than the first-order horizon~\eqref{eq:global-lipschitz-horizon}. An illustration of the differences between the first-order and second-order horizons, along with their workspace configurations, can be seen in Fig.~\ref{fig:envelopes}. 

However, the second-order horizon can grow loose because the assumed linear speed grows linearly without bound. Meanwhile, the linear velocity $\dot{\bfp}_i=\sum_j\bfomega_j\times\bfr_{ij}$ also admits a uniform bound $V_i:=\sum_{j=1}^{m_i}\ell_j\Omega_j \geq \| \dot{\bfp}_s(\tau)\|$.
For $C_i > 0$, the assumed speed under the acceleration bound~\eqref{eq:acceleration-bound} exceeds the bound $V_i$ at $T_{i} = \tfrac{V_i - \|J_i(\bfq_0)\bfu^\star\|}{C_i}$, after a displacement $D_i = \tfrac{V_i^2 - \|J_i(\bfq_0)\bfu^\star\|^2}{2C_i}$. Thus, when $\tau_{i,\mathrm{Kin}} > T_i$, the safe horizon can be corrected with a constant-speed model as:
\begin{equation}
    \tau_{i,\textrm{Kin}} = \min\{\tau_{\mathrm{target}}, T_i + \frac{h_i(\bfq_0) - D_i}{V_i} \},
\end{equation}\label{eq:speed-bound-correction}
where division by $V_i=0$ is read as $+\infty$. When $\tau_{i,\mathrm{Kin}}\leq T_i$, we retain the value
from~\eqref{eq:quadratic-horizon}.

\noindent\textbf{Combined hold time:}
Since $\tau_{i,\mathrm{Lip}}$ and $\tau_{i,\mathrm{Kin}}$ certify barrier $i$, we take the larger per barrier and the smallest over barriers:
\begin{equation}
    \tau_i := \max\{\tau_{i,\mathrm{Lip}},\tau_{i,\mathrm{Kin}}\},
    \quad
    \tau_{\mathrm{hold}} := \min\Big\{\tau_{\mathrm{target}},\,\min_{i\in\mathcal I}\tau_i\Big\}.
    \label{eq:tau-hold}
\end{equation}
By Propositions~\ref{prop:certified-steering-horizon} and~\ref{prop:kinematic}, $h_i(\bfq(\tau))\ge0$ for all $i\in\mathcal I$ and $\tau\in[0,\tau_{\mathrm{hold}}]$.



\noindent\textbf{Self-collisions:}
To handle self-collisions between pairs $(a,b)\in\mathcal P$, we work in the moving frame of the lowest common ancestor (LCA) link of the two spheres. The term $\|\dot{\bfp}_i(0)\|$ in~\eqref{eq:quadratic-displacement} is replaced with the initial relative velocity between $a$ and $b$ in the LCA frame. The corresponding acceleration bound~\eqref{eq:acceleration-bound} follows from summing the acceleration bounds of each sphere, computed while treating the LCA as the base link of the manipulator. This construction replicates all our statements for the self-collision barrier.



\noindent\textbf{Bounding the angle between joint axes:}
Let $k<m$, and let $\theta_{km}^0:=\arccos(\bfz_k(0)^{\mathsf T}\bfz_m(0))$.
Only joints strictly between $k$ and $m$ change the angle between the two axes: proximal ($\leq k$) joints rotate both, whereas distal ($\geq m$) joints rotate neither.
Each intermediate joint rotates $\bfz_m$ relative to $\bfz_k$ by at most its own rotation angle from the motion $\bfu^\star$. So, the angle changes by at most $\delta_{km}:=\tau_{\mathrm{target}}\sum_{r=k+1}^{m-1}|u_r^\star|$.
With the lower bound $\theta_{km}^-:=\max\{0,\theta_{km}^0-\delta_{km}\}$ and the upper bound $\theta_{km}^+:=\min\{\pi,\theta_{km}^0+\delta_{km}\}$, the monotonicity of cosine on $[0,\pi]$ gives $\underline c_{km}:=\cos\theta_{km}^+$ and $\overline c_{km}:=\cos\theta_{km}^-$.
Adjacent axes ($m=k+1$) keep constant angles exactly.

\begin{figure}
    \centering
    \includegraphics[width=\linewidth]{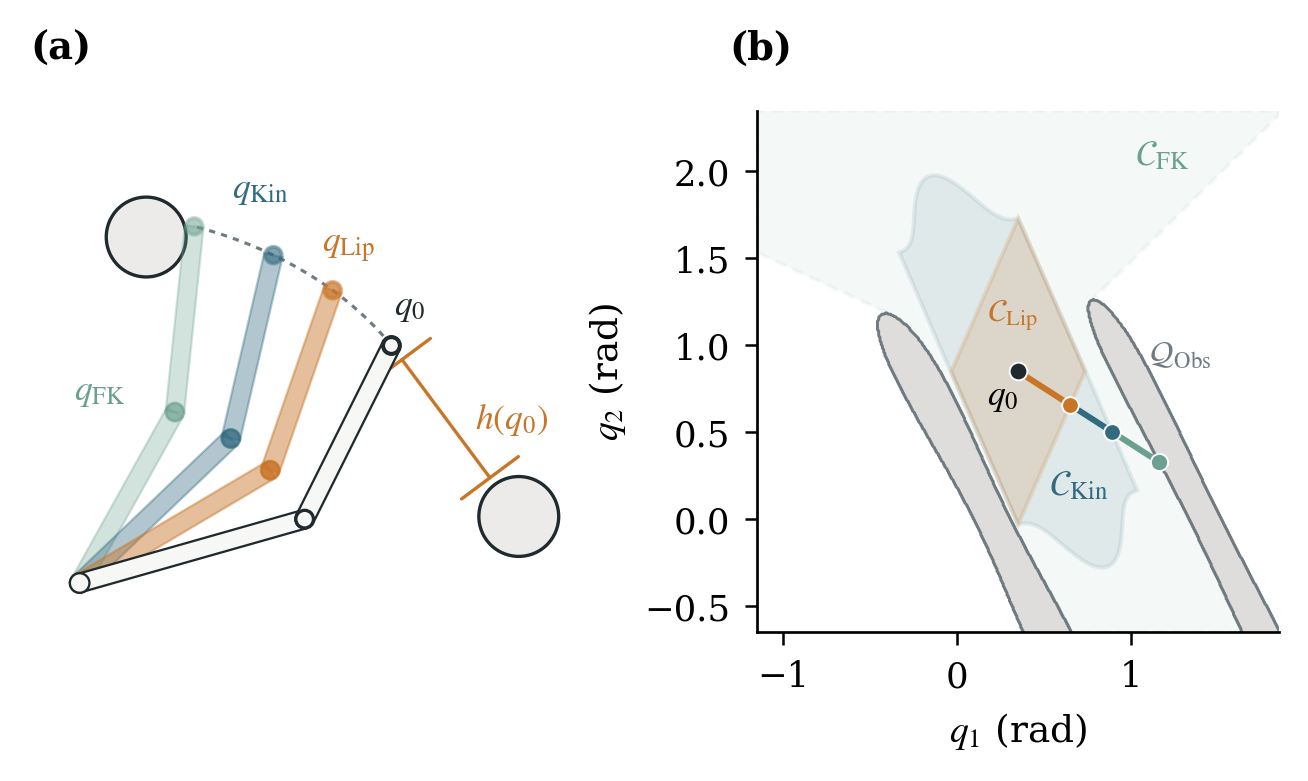}
\caption{\small Workspace and configuration-space view of the adaptive horizon bounds at $\bfq_0$.
  (a) The initial robot configuration, the first-order and second-order
  certified endpoints $\bfq_{\mathrm{Lip}}$ and $\bfq_{\mathrm{Kin}}$, and the
  exact contact configuration $\bfq_{\mathrm{FK}}$ in the workspace. 
  (b) A configuration-space view of (a), with regions showing potential hold times along other directions.}
    \label{fig:envelopes}
\end{figure}
\subsubsection{Holonomic Mobile Bases}
A holonomic mobile base adds additional degrees of freedom in linear velocity $\bfv_{\mathrm{base}} \in \mathbb{R}^2$ and rotation $\omega_{\mathrm{base}} \in \mathbb{R}$. Rotation $\omega_{\mathrm{base}}$ can be added without any modification by treating it as another revolute joint at the mobile base link.

Translation adds to displacement. For the first-order bound, \eqref{eq:held-control-decrease-rate} is modified as $\gamma_i^\star \rightarrow \gamma_i^\star + \|  \bfv_{\mathrm{base}} \|$ for environment barriers \(i\in \mathcal{S}\). For the second-order bound, the initial velocity changes as $\| \dot \bfp_i(0) \| \rightarrow \| \dot \bfp_i(0) + \bfv_{\mathrm{base}} \| $, and the velocity bound as $V_i \rightarrow V_i + \|\bfv_{\mathrm{base}} \|$. 
 

\subsection{Lipschitz QP Row Reduction}
\label{sec:lipschitz-row-reduction}
The Lipschitz coefficients used to certify the adaptive horizon
also bound the decrease of each barrier over the entire admissible
control set. This allows removing CBF constraints that are inactive over admissible velocities: 

\begin{theorem}[Certified CBF constraint removal]
\label{thm:cbf-row-redundancy}
Let $\bar u_k :=
    \max\{|u_{\min,k}|,|u_{\max,k}|\}$ be the maximum joint velocity from~\eqref{eq:cbf-steer-qp}. For each
barrier $i\in\mathcal I$, the worst-case decrease rate is
\begin{equation}
    \bar r_i
    :=
    \sum_{k=1}^{n}A_{i k}\bar u_k.
    \label{eq:max-barrier-decrease}
\end{equation}
Now, for each barrier $i \in \mathcal{I}$, $h_i(\bfq) \geq \frac{\bar r_i}{\kappa}$ implies $\nabla h_i(\bfq)^\top\bfu \geq -\kappa h_i(\bfq)$ for all $\bfu$ satisfying the limit $\bfu_{\min} \leq \bfu \leq \bfu_{\max}$. 
Consequently,
\eqref{eq:cbf-steer-qp} may be solved using only
\begin{equation}
    \mathcal I_{\mathrm{QP}}(\bfq)
    :=
    \left\{
        i\in\mathcal I:
        h_i(\bfq)<\frac{\bar r_i}{\kappa}
    \right\}.
    \label{eq:screened-constraint-set}
\end{equation}
\end{theorem}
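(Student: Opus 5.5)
The plan is to prove the pointwise implication first and then deduce that the screened QP and the full QP~\eqref{eq:cbf-steer-qp} have the same feasible set.

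\emph{Step 1: bound the directional derivative of each barrier.} Fix $i$ and $\bfu$ with $\bfu_{\min}\le\bfu\le\bfu_{\max}$. Following the setup of Proposition~\ref{prop:certified-steering-horizon}, consider the held trajectory $\bfq(\tau)=\bfq+\tau\bfu$. The Lipschitz estimate preceding that proposition gives
\[
h_i(\bfq+\tau\bfu)-h_i(\bfq)\ \ge\ -\tau\sum_k A_{ik}|u_k| .
\]
For the environment barriers this follows from the 1-Lipschitz SDF together with the bounds $\Lambda_{ik}$. For the self-collision barriers it follows from the triangle inequality.

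\emph{Step 2: pass to the limit.} Divide by $\tau>0$ and let $\tau\to0^+$. Since $h_i$ is differentiable (the SDF is assumed differentiable), this yields
\[
\nabla h_i(\bfq)^\top\bfu\ \ge\ -\sum_k A_{ik}|u_k|.
\]
Because $A_{ik}\ge0$ and $|u_k|\le\bar u_k$, the right-hand side is at least $-\bar r_i$.

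\emph{Step 3: conclude the implication.} If $h_i(\bfq)\ge \bar r_i/\kappa$, then $-\kappa h_i(\bfq)\le-\bar r_i\le\nabla h_i(\bfq)^\top\bfu$. This is exactly the CBF constraint, and it holds for every admissible $\bfu$.

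\emph{Step 4: the consequence.} Every row $i\notin\mathcal I_{\mathrm{QP}}(\bfq)$ is satisfied on the whole box $\bfu_{\min}\le\bfu\le\bfu_{\max}$. Intersecting the box with the retained rows therefore gives the same feasible set as the full constraint list. Since the objective is unchanged, the minimizer $\bfu^\star$ (unique by strict convexity) and the feasibility status coincide for the two QPs.

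The main obstacle is the self-collision case and differentiability. For $i=(a,b)$, the barrier is $\norm{\bfp_a-\bfp_b}-r_a-r_b$. Its gradient is well defined whenever $\bfp_a\ne\bfp_b$, which holds when $h_i\ge\bar r_i/\kappa\ge0$ and $r_a+r_b>0$. The triangle inequality then gives $|\tfrac{d}{d\tau}h_i|\le\norm{\dot\bfp_a}+\norm{\dot\bfp_b}\le\sum_k(\Lambda_{ak}+\Lambda_{bk})|u_k|$.

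The one point I would check carefully is that $\norm{\dot\bfp_s}\le\sum_k\Lambda_{sk}|u_k|$ holds pointwise, i.e., that the $\Lambda$ bounds from~\cite{ademovic2016expanded} really control the Jacobian column norms $\norm{J_s e_k}\le\Lambda_{sk}$. This holds because column $k$ is $\bfz_k\times(\bfp_s-\bfo_k)$, where $\bfo_k$ is the origin of joint $k$, and $\norm{\bfp_s-\bfo_k}$ is at most the summed link lengths. At points where the SDF gradient is undefined, I would argue with one-sided directional derivatives and the same inequality.
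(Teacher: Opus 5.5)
Your proposal is correct and follows the paper's own argument. You bound $\nabla h_i(\bfq)^\top\bfu$ from below by $-\sum_k A_{ik}\bar u_k=-\bar r_i$ using the Lipschitz coefficients and $|u_k|\le\bar u_k$, then invoke $h_i(\bfq)\ge\bar r_i/\kappa$ and conclude that the removed rows leave the feasible set, and hence the unique minimizer, unchanged, while also spelling out the $\tau\to0^+$ passage from the finite-difference Lipschitz bound to the gradient, which the paper leaves implicit.
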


\begin{proof}
For every admissible $\bfu$ of~\eqref{eq:cbf-steer-qp}, $|u_k|\leq\bar u_k$, and Lipschitz continuity implies:
\begin{equation}
    \nabla h_i(\bfq)^\top\bfu
    =
    \sum_{k=1}^{n}
    \frac{\partial h_i}{\partial q_k}(\bfq)u_k
    \geq
    -\sum_{k=1}^{n}A_{i k}\bar u_k,
\end{equation}
where the right-hand side is $-\bar{r}_i$.
By precondition, $-\bar r_i\geq-\kappa h_i(\bfq)$, so the CBF constraint for $i$ is
satisfied for all $\bfu$ satisfying $\bfu_{\min} \leq \bfu \leq \bfu_{\max}$. Such constraints therefore
do not affect the feasible set or the solution.
\end{proof}
For a mobile manipulator, Thm.~\ref{thm:cbf-row-redundancy} extends similarly by treating \(\omega_\mathrm{base}\) as an additional revolute joint, and after, for \(i \in \mathcal{S}, \bar r_i \coloneqq \bar v_{\mathrm{base}} + \sum_{k=1}^{n}A_{i k}\bar u_k. \) Self-collision barriers are unaffected since base motion preserves pairwise distances.

Unlike fixed distance thresholds or nearest-obstacle heuristics, our screening is certified over the entire admissible control set $\bfu_{\min} \leq \bfu \leq \bfu_{\max}$. Removing these constraints not only speeds up the QP solve, but also obviates the evaluation of gradients $\nabla h_i(\bfq_0)$.

%% file: sections/experiments.tex
\section{Experiments}

We evaluate all planners on 689 UR5 planning problems from
MotionBenchMaker\cite{Chamzas_2021} across seven cluttered environments: Box, Cage, Small Bookshelf, Tall Bookshelf, Narrow Bookshelf, Tabletop Pick, and Under-Table Pick.

All planners are implemented in C++ using OMPL’s RRT-C implementation. The CBF variants are implemented as a subclass of \texttt{ompl::base::StateSpace}. During each RRT-C extension, this class invokes the CBF filter and records the resulting safe trajectory. The CBF QPs are solved using QPMAD \cite{qpmad} with \(\kappa=0.5\) with a proportional nominal controller. Fixed-step steering uses an integration step of 1~ms. All methods use a goal tolerance of 0.1~rad and a maximum extension range of 2.0~rad. A 10-second timeout was used for all plans. RRT-C uses collision-checking resolution of~0.01 rad along a straight line. All planners run on a computer with a 24-core Intel Core Ultra 9 285K CPU and 64~GiB of RAM.

We compare \textbf{\textcolor{ours}{our}} (FLASH-CBF-RRT-C) method against 3 baselines and 3 ablations:
\par
    \textbf{Baselines}
    \begin{itemize}
        \item \textbf{\textcolor{rrtc}{RRT-C}:}
        RRT-C with collision checking.

        \item \textbf{\textcolor{cbfrrtc}{CBF-RRT-C}:}
        CBF-RRT-C variant with fixed steps derived from \cite{yang2019sampling}.

        \item \textbf{\textcolor{skipqp}{No-QP}:}
        Fixed-step accept-or-stop steering derived from
        \cite{yang2025lqrcbfrrt}. At each integration step, extension
        applies the nominal control unchanged if it satisfies every
        CBF constraint and otherwise terminates.
    \end{itemize}

\textbf{Ablations}
    \begin{itemize}
        \item \textbf{\textcolor{fixedstep}{Fixed}:}
        FLASH-CBF-RRT-C without horizon hops. It only uses the
        redundant QP constraint removal from
        Thm.~\ref{thm:cbf-row-redundancy}.

        \item \textbf{\textcolor{firstorder}{L1}:}
        FLASH-CBF-RRT-C with the first-order horizon.

        \item \textbf{\textcolor{secondorder}{No-QP + L2}:}
        The same accept-or-stop controller, as No-QP, utilizing adaptive horizon
        from Proposition.~\ref{prop:kinematic}. Accepted nominal control is held for the second-order certified horizon.
    \end{itemize}


\subsection{Planner Comparison}

Fig.~\ref{fig:mbm-planner-comparison} and
Table~\ref{tab:ur5_results} compare RRT-C, CBF-RRT-C, and QP-free CBF planners. RRT-C and CBF-RRT-C solve all 689 problems, while No-QP and No-QP+L2 solve 601 and 631, respectively. All CBF-based paths pass collision auditing, whereas one RRT-C path fails. Compared with RRT-C, CBF-RRT-C reduces the median tree size from 20 to 11 vertices and path cost from 13.69 to 9.86~rad, but increases planning time from 17.00 to 17.24~ms. Thus, its sample-efficiency gains are offset by QP overhead, consistent with~\cite{yang2025lqrcbfrrt}.

Our method reduces the median planning time to 4.45~ms, corresponding to speedups of \x{3.82} over RRT-C and \x{3.88} over CBF-RRT-C. It requires a median of 298 filter evaluations, compared with 3,597 collision checks for RRT-C and 1,119 filter evaluations for CBF-RRT-C. These represent reductions by factors of \x{12.07} and \x{3.76}, respectively, although collision checks and filter evaluations have different computational costs. The median tree size is 11 vertices, and the median path cost is 10.015~rad, approximately $1.5\%$ higher than CBF-RRT-C and $26.9\%$ lower than RRT-C.

The two QP-free variants isolate the benefit of adaptive
propagation from that of correcting the nominal control.
No-QP has 601 solutions, with medians of 92.69~ms, 90 vertices, and 10,639 evaluations. Adding the second-order time horizon increases the solution count
to 631 and reduces the medians to 12.24~ms, 99 vertices, and 682 evaluations. This results in a \x{7.57} speedup and \x{15.60} fewer evaluations.

Compared with No-QP+L2, our method reports 58 additional solutions
and is \x{2.75} faster in median planning time. It also reduces the median evaluation count by a factor of \x{2.29} and the median path cost by $44.14\%$, from 17.93 to 10.01~rad. These results indicate that both adaptive propagation and CBF steering improve planning performance in a complementary manner. The certified horizons reduce reevaluation frequency, while the CBF-QP enables sustained progress around obstacles, leading to longer extensions and improving sample efficiency and progress toward the goal.

\begin{figure}
    \centering
    \includegraphics[width=1\linewidth]{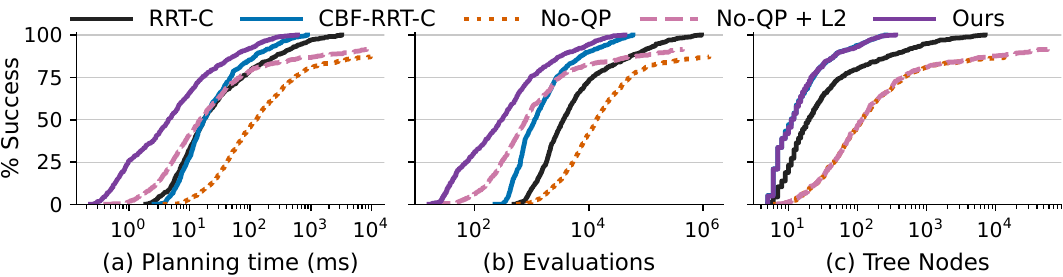}
    \caption{\small  Comparison with geometric, CBF-QP, and QP-free planning. Dark-gray, blue, orange, pink, and purple curves denote RRT-C, CBF-RRT-C, No-QP, No-QP + L2, and Ours respectively. Plots show the percentage of problems solved within the planning-time, filter-evaluation, or tree-node budget.}
    \label{fig:mbm-planner-comparison}
\end{figure}

\subsection{Adaptive-Horizon Ablation}
Fig.~\ref{fig:mbm-adaptive-ablation} compares the fixed-timestep, first-order, and second-order horizon variants of FLASH-CBF-RRT with identical constraint screening. This ablation shows how much each horizon bound affects planning. All three report successful solutions for all 689 problems, with a median of 11 vertices for all planners. The first-order horizon reduces the median filter evaluation count from 1,119 to 529 and the median planning time from 16.69 to 7.45~ms, corresponding to \x{2.12} fewer evaluations and a \x{2.24} speedup. The second-order horizon further reduces the evaluation count to 298 and planning time to 4.45~ms, yielding an additional \x{1.78} reduction in evaluations and \x{1.68} speedup. The second-order variant is \x{3.75} faster than the fixed step size variant in runtime, with \x{3.76} fewer CBF-QP solves. While the second-order variant reduces the CBF-QP solve count relative to the first-order variant by $43.67\%$, it reduces the runtime by $40.27\%$. This gap is due to the overhead of second-order certification.

\begin{figure}
    \centering
    \includegraphics[width=1\linewidth]{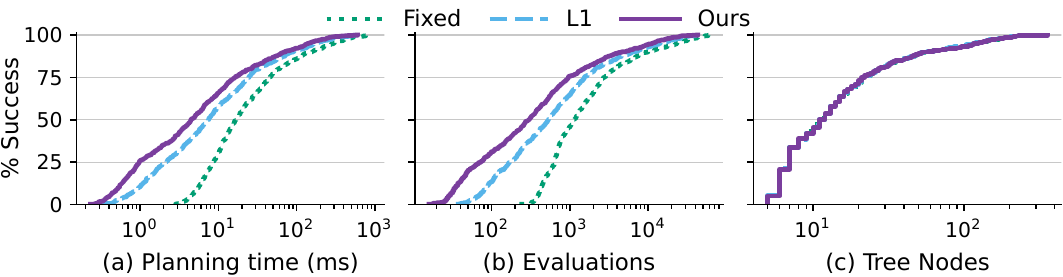}
    \caption{\small Adaptive-step ablation on the UR5 dataset. Green, turquoise, and purple denote Fixed, L1, and Ours, respectively. }
    \label{fig:mbm-adaptive-ablation}
\end{figure}

\input{Tables/Table_UR5Comparison}

\subsection{Real-Robot Planning Experiments}

We evaluate FLASH-CBF-RRT on a 17-DoF Pollen Reachy2 robot in a tabletop environment with obstacles as shown in Fig.~\ref{fig:reachyexp}. The SDF of the environment is estimated using Kernel-SDF \cite{kernelSDF} from RGB-D images from an Orbbec Gemini 336 RGB-D camera onboard the robot. All planning iterations were run onboard a SolidRun Bedrock V3000 industrial PC. Using the same map and the same start configuration, we drew 100 random valid goal configurations around the table. A ROS2~\cite{ros2} interface was implemented to query the SDF, and to control the robot.

\begin{figure}
    \centering
    \includegraphics[width=\linewidth,trim={0.5 1.70cm 0.5 0.5cm},clip]{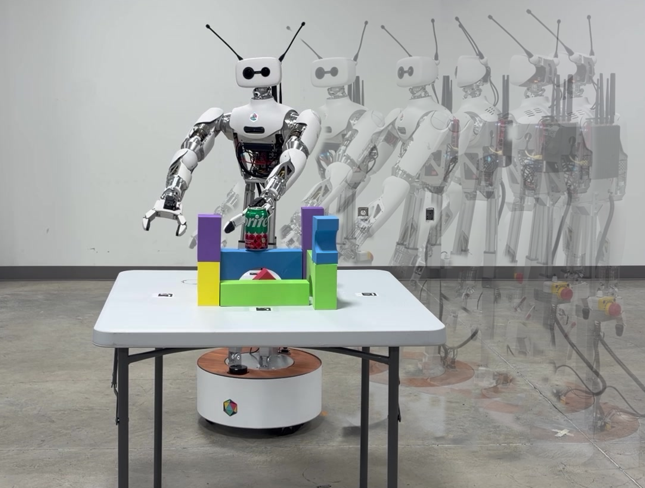}
    \caption{\small Reachy2 in the tabletop scene. Ghost poses show
    configurations along one planned 17-DoF motion, where the mobile base
    approaches the table, and both arms reach between the obstacles.}
    \label{fig:reachyexp}
\end{figure}
\input{Tables/TableReachyEXP}
Table~\ref{tab:reachy_results} reports median performance over 100 mobile-manipulator plans using identical instances and seeds. Both methods solve all goals, but FLASH-CBF-RRT is \(2.22\times\) faster, uses \(4.15\times\) fewer geometric evaluations, and produces \(15.1\%\) less arm motion, consistent with the UR5 results.

%% file: Tables/Table_UR5Comparison.tex
\begin{table}[t]
    \centering
    \caption{\small Results across 689 scenarios. Solved: number of scenarios solved. The reported statistics are medians. Eval.: number of collision checks (RRT-C), integration steps for No-QP, or CBF-QP iterations. Nodes: number of vertices in the tree; Motion: path length in radians; QP size: number of constraints per CBF-QP solve. CBF variants all return collision-free paths, while one collision was found among RRT-C. Best values are boldfaced.}
    \label{tab:ur5_results}
    \small
    \setlength{\tabcolsep}{4pt}
    \begin{tabular}{@{}lcccccc@{}}
        \toprule
        \textbf{Method} &
        \textbf{Solved} &
        \textbf{Time} &
        \textbf{Nodes} &
        \textbf{Eval.} &
        \textbf{Motion} &
        \textbf{QP} \\
        & & \textbf{(ms)} & & & \textbf{(rad)} & \textbf{size} \\
        \midrule
        \textcolor{ours}{Ours}
            & \textbf{689}
            & \textbf{4.45}
            & \textbf{11}
            & \textbf{298}
            & 10.01
            & 277.2 \\
        \hline
        \textit{Baselines} & & & & & & \\
        \textcolor{rrtc}{RRT-C}
            & \textbf{689}
            & 17.00
            & 20
            & 3597
            & 13.69
            & -- \\

        \textcolor{cbfrrtc}{CBF-RRT-C}~\cite{yang2019sampling}
            & \textbf{689}
            & 17.24
            & \textbf{11}
            & 1119
            & \textbf{9.86}
            & 342.0 \\

        \textcolor{skipqp}{No-QP}~\cite{yang2025lqrcbfrrt}
            & 601
            & 92.69
            & 90
            & 10639
            & 17.66
            & -- \\
        \hline
        \textit{Ablations} & & & & & & \\

        \textcolor{fixedstep}{Fixed}
            & \textbf{689}
            & 16.69
            & \textbf{11}
            & 1119
            & \textbf{9.86}
            & \textbf{266.2} \\

        \textcolor{firstorder}{L1}
            & \textbf{689}
            & 7.45
            & \textbf{11}
            & 529
            & 9.87
            & 275.7 \\
\textcolor{secondorder}{No-QP + L2}
            & 631
            & 12.24
            & 99
            & 682
            & 17.93
            & -- \\
        \bottomrule
    \end{tabular}
\end{table}

%% file: Tables/TableReachyEXP.tex
\begin{table}[!h]
    \centering
\caption{\small Median performance over 100 randomized onboard planning queries on a real Reachy2 mobile manipulator. Solved: no. of problems solved. Time: planning time. Eval: no. of collision checks or CBF-QP solves. Path length: base motion length. Motion: arm motion length.
Goals are sampled over a mapped table, with base poses placed on a
surrounding ring. Planning is performed in the full 17-DoF
configuration space using a live SDF map.}
    \label{tab:reachy_results}
    \small
    \setlength{\tabcolsep}{4pt}
    \begin{tabular}{@{}lcccccc@{}}
        \toprule
        \textbf{Method} &
        \textbf{Solved} &
        \textbf{Time} &
        \textbf{Nodes} &
        \textbf{Eval.} &
        \textbf{Path Length} &
        \textbf{Motion} \\
        & & \textbf{(ms)} & & & \textbf{(m)} & \textbf{(rad)} \\
        \midrule
        Ours
            & \textbf{100}
            & \textbf{12.21}
            & \textbf{10}
            & \textbf{206.5}
            & 1.648
            & \textbf{1.675} \\

        RRT-C
            & \textbf{100}
            & 27.05
            & 13
            & 856.5
            & \textbf{1.623}
            & 1.973 \\

        \bottomrule
    \end{tabular}
\end{table}

%% file: sections/conclusion.tex
\section{Conclusion}
\label{sec:conclusion}
We presented FLASH-CBF-RRT, a sampling-based planner that achieves fast, continuously safe motion planning by using a differentiable SDF representation in two complementary manners. 
It biases tree search toward free space using CBF-QP as a local collision-avoidance controller.
The resulting control action is applied with a certified adaptive step size determined using two novel closed-form bounds derived from the robot kinematics and the SDF clearance information. This allows FLASH-CBF-RRT to make long excursions in open spaces and short steps near obstacles. The bounds also allow removing provably inactive CBF constraints to reduce the QP size for faster computation. Experiments show that FLASH-CBF-RRT achieves both sample-efficient exploration and lower overall planning time, in both simulation and hardware. For future work, we plan to extend the adaptive horizon computation to time-varying fields and to explore asymptotically optimal planning.
